\lstdefinestyle{mypromptstyle}{
  basicstyle=\ttfamily\footnotesize, 
  breaklines=true,                
  showstringspaces=false,         
  columns=flexible,
  keepspaces=true,
  backgroundcolor=\color{gray!10}, 
  frame=none,                     
  xleftmargin=1em,                
  escapeinside={(*@}{@*)},        
}
\newcommand{\methodfullname}{Shapley Credit Assignment Rewards}
\newcommand{\methodacronym}{SCAR}
\newcommand{\methodname}{\methodfullname{} (\methodacronym{})}
\newtheorem{theorem}{Theorem}[section]
\definecolor{poscontrib}{HTML}{C0392B} 
\definecolor{negcontrib}{HTML}{1F618D} 
\definecolor{zerocontrib}{HTML}{B0B0B0} 
\newcommand{\hlscore}[3]{
  \begin{tabular}{@{}c@{}}
    {\setlength{\fboxsep}{1pt}\colorbox{#3}{\strut\color{black}#1}} \\
    \texttt{#2}
  \end{tabular}%
}
\newcommand{\plainscore}[2]{
  \begin{tabular}{@{}c@{}}
    #1 \\ 
    \texttt{#2}
  \end{tabular}%
}
\title{SCAR: Shapley Credit Assignment for More Efficient RLHF}
\author{
  Meng Cao$^{1,2,}$\thanks{Equal contribution.} \quad Shuyuan Zhang$^{1,2,*}$ \\
  Xiao-Wen Chang$^{1}$ \quad Doina Precup$^{1,2,3,4}$ \\
  $^1$School of Computer Science, McGill University, Montréal, QC, Canada \\
  $^2$Mila - Quebec AI Institute \\
  $^3$DeepMind \\
  $^4$CIFAR AI Chair \\
  \texttt{\{meng.cao, shuyuan.zhang\}@mail.mcgill.ca} \\
  \texttt{\{chang, dprecup\}@cs.mcgill.ca}
}
\date{}
\begin{document}
\maketitle

\begin{abstract}
 Reinforcement Learning from Human Feedback (RLHF) is a widely used technique for aligning Large Language Models (LLMs) with human preferences, yet it often suffers from sparse reward signals, making effective credit assignment challenging.
 In typical setups, the reward model provides a single scalar score for an entire generated sequence, offering little insight into which token or span-level decisions were responsible for the outcome.
 To address this, we propose \methodfullname{} (\methodacronym{}), a novel method that leverages Shapley values in cooperative game theory. \methodacronym{} distributes the total sequence-level reward among constituent tokens or text spans based on their principled marginal contributions. This creates dense reward signals, crucially, without necessitating the training of auxiliary critique models or recourse to fine-grained human annotations at intermediate generation stages. Unlike prior dense reward methods, \methodacronym{} offers a game-theoretic foundation for fair credit attribution. Theoretically, we demonstrate that \methodacronym{} preserves the original optimal policy, and empirically, across diverse tasks including sentiment control, text summarization, and instruction tuning, we show that \methodacronym{} converges significantly faster and achieves higher final reward scores compared to standard RLHF and attention-based dense reward baselines. Our findings suggest that \methodacronym{} provides a more effective and theoretically sound method for credit assignment in RLHF, leading to more efficient alignment of LLMs.
\end{abstract}

\section{Introduction}

Large Language Models (LLMs) have demonstrated remarkable capabilities in diverse natural language tasks, yet aligning their outputs with complex human instructions and preferences remains a critical challenge \cite{NEURIPS2020_1457c0d6, bai2022training}. Reinforcement Learning from Human Feedback (RLHF) has emerged as a powerful paradigm for this alignment \cite{NIPS2017_d5e2c0ad, NEURIPS2020_1f89885d, NEURIPS2022_b1efde53}, enabling models to generate more helpful, harmless, and honest responses. The standard RLHF pipeline typically involves training a reward model (RM) on human preference data and then using this RM to fine-tune the LLM via reinforcement learning algorithms like PPO \cite{schulman2017proximal}.

A fundamental challenge in this pipeline arises from the nature of reward delivery. LLMs generate text auto-regressively, selecting one token at a time. Each token selection constitutes an action within an RL framework. However, the RM typically provides only a single scalar reward signal upon the completion of the entire sequence \cite{NEURIPS2020_1f89885d, bai2022training}. This results in a sparse reward setting \cite{hare2019dealing}, which is notoriously difficult for RL algorithms to optimize efficiently \cite{yu2018towards}. The core difficulty lies in credit assignment \cite{sutton1984temporal,lansdell2019learning}: determining which specific token choices (actions) early in the generation process were responsible for the final high or low reward. This sparsity can lead to slow convergence, high variance gradients, training instability \cite{10.5555/3504035.3504427, razin2024vanishing}, and potentially suboptimal final policies.

Recognizing this limitation, recent work has focused on densifying the reward signal specifically for RLHF. One approach involves training dense reward models using fine-grained human annotations at intermediate generation steps \cite{lightman2023let, wu2023fine}. While effective, this approach incurs substantial annotation costs. To mitigate these costs, auxiliary LLMs have been proposed as critics to generate intermediate feedback \cite{klissarov2024motif,  cao-etal-2024-enhancing, yoon-etal-2024-tlcr}, though this relies on the critic's inherent capabilities. An alternative strategy, most relevant to our work, aims to derive dense rewards by redistributing the existing sequence-level reward from the RM without new annotations or auxiliary models. A notable example is Attention-Based Credit (ABC) \cite{chan2024dense}, which repurposes the RM's final layer's attention weights to proportionally distribute the final reward across tokens. ABC provides a token-level signal at minimal computational overhead while preserving optimality via potential-based reward shaping \cite{10.5555/645528.657613}.

While ABC offers a practical heuristic, relying on attention weights for credit assignment presents several limitations. Firstly, attention mechanisms are not explicitly designed for, nor do they offer a theoretically grounded guarantee of, fair credit attribution; their interpretability for attribution is debated \cite{jain-wallace-2019-attention, wiegreffe-pinter-2019-attention}. Secondly, relying on the final layer's attention is insufficient, as significant amount of information extraction and reasoning processes that influence the final reward prediction occur throughout the intermediate layers of the model \cite{geva-etal-2022-transformer, meng2022locating}. Lastly, standard attention scores are non-negative, making it inherently difficult to assign explicit negative credit to tokens or spans that detract from the output quality.

In our pursuit of a more principled approach to dense credit assignment in RLHF, we turn to cooperative game theory and propose \methodname{}. The core idea is to model sequence generation as a cooperative game where individual text segments (e.g., tokens, or larger spans like phrases or sentences) are the players. The value of any coalition of these players (i.e., a subset of segments) is determined by querying the reward model with the partial sequence formed by concatenating the segments in that coalition. We then employs the Shapley value \citep{shapley1953stochastic} to fairly distribute the total sequence-level reward among these segments. This distribution is based on each segment's average marginal contribution across all possible orderings of player arrivals, offering a theoretically grounded method that uniquely satisfies desirable axioms such as \textit{efficiency, symmetry}, and \textit{linearity}, thereby ensuring a fair and principled allocation of credit \cite{shapley1953stochastic}.

Recognizing that the exact computation of Shapley values is exponentially complex in the number of players and thus prohibitive for long sequences, we incorporate adaptive text segmentation to maintain a tractable number of players. We leverage constituency parsing to segment the generated text into a hierarchical structure of syntactically coherent units (e.g., phrases or clauses), which then serve as the players. This significantly reduces the computational burden by grouping semantically related tokens. For longer responses, segmentation at the sentence level provides a further practical alternative. We empirically validate \methodacronym{} across three NLP tasks with different response length, demonstrating that it consistently leads to faster convergence and achieves higher final reward scores compared to standard sparse RLHF and attention-based dense reward baselines.

\section{Related Work}
\subsection{RL for Text Generation}
Reinforcement Learning (RL) has been increasingly leveraged for fine-tuning LLMs in text generation tasks \cite{ryang-abekawa-2012-framework, NIPS2016_2f885d0f, li-etal-2016-deep, buck2018ask, bahdanau2017an}. Unlike standard supervised fine-tuning which relies on maximizing the likelihood of ground-truth sequences, RL enables optimization directly towards sequence-level objectives or metrics that are not differentiable, such as ROUGE scores in summarization or human preferences for dialogue quality \citep{NEURIPS2020_1f89885d, NEURIPS2022_b1efde53}. A particularly successful application of this is Reinforcement Learning from Human Feedback (RLHF) \cite{NEURIPS2022_b1efde53, NIPS2017_d5e2c0ad, NEURIPS2020_1f89885d, bai2022training}, which has become a widely adopted technique for aligning LLMs with complex human values and instructions. The typical RLHF process involves learning a reward model (RM) from a dataset of human comparisons between different model outputs, followed by fine-tuning the LLM policy to maximize the scalar reward assigned by the RM to the generated text sequences. However, a widely acknowledged challenge in this standard RLHF framework is the inherent sparsity of the reward signal: the RM typically provides feedback only after the entire sequence has been generated \cite{NEURIPS2020_1f89885d, bai2022training}. This terminal reward makes the temporal credit assignment problem—identifying which specific token choices (actions) contributed positively or negatively to the final outcome—particularly difficult \cite{sutton1984temporal}. This difficulty has spurred research into methods for creating denser, more informative reward signals, which we discuss further in the following subsection.

\subsection{Dense Reward Strategies}
\citet{ng1999policy} laid the groundwork for potential-based reward shaping in RL, demonstrating that such shaping can effectively reduce training time without changing the optimal policy. This concept has inspired subsequent research in augmenting learning processes through auxiliary rewards.
\citet{NIPS2016_afda3322, pmlr-v120-gong20a, pmlr-v70-ostrovski17a, NIPS2017_3a20f62a} have employed pseudo-count-based rewards to encourage exploration in environments where rewards are sparse. \citet{pmlr-v70-pathak17a} use the agent's prediction errors as intrinsic reward signals to encourage exploration.
\citet{NEURIPS2018_51de85dd} proposed a method where a parameterized intrinsic reward model is learned during training to generate dense reward signals.
This approach, however, presents certain optimization difficulties due to the necessity of calculating second-order gradients. 
\citet{wu2023fine, lightman2023let} employ human annotators to provide detailed span-level reward signals, demonstrating that these fine-grained rewards yield better performance compared to holistic rewards.
More recently, Attention Based Credit (ABC) \citep{chan2024dense} proposed using the reward model's internal attention weights to redistribute the terminal reward across tokens. ABC provides density ``for free'' and preserves optimality via potential-based reward shaping (PBRS) \citep{ng1999policy}. However, attention weights are a heuristic proxy for importance and may not reflect a principled allocation of credit. Our method differs fundamentally by employing Shapley Values, a concept grounded in cooperative game theory, to achieve a fair and principled distribution of the reward based on the marginal contribution of text segments.

\subsection{Shapley Values in LLMs}
Several previous works have already proposed approaches to interpreting the behaviour of LLMs using Shapley values. \citet{goldshmidt2024tokenshap} enhanced LLM output interpretability by calculating the Shapley values of sub-sequences. \citet{mohammadi2024explaining} demonstrated how the Shapley value can uncover that the LLM decisions are disproportionately influenced by tokens providing minimal informative content. \citet{liu2023prompt} used Shapley values to quantify the value of prompts
equitably in multi-prompt learning methods. In addition to enhancing the interpretability of LLM outputs, Shapley value methods are also used for model pruning/compression \cite{sun2025efficient} and dataset refinement \cite{he2024shed}. However, to the best of our knowledge, there's limited work on solving the sparse reward problem in RLHF using Shapley values.
We note contemporaneous work by \citet{koo2025learning} which also explores Shapley values for reward distribution. Their approach estimates per-token rewards using interpretability methods and then employs Bayesian Optimization (BO) in a bilevel framework to learn parameters for a shaping function that combines these estimates.
Our work, \methodacronym{}, differs in two ways. First, to address the inefficiency of token-level Shapley calculations for longer responses, \methodacronym{} allows for adaptive text segmentation, including grouping tokens into larger syntactic spans (e.g., via constituency parsing) for more efficient credit assignment. Secondly, unlike \citet{koo2025learning} requiring an outer optimization loop (like BO) to learn shaping parameters, \methodacronym{} directly uses the (approximated) Shapley values from these text units to construct the dense reward signal (Equation~\ref{eq:total_dense_reward_convex}).

\section{Shapley Credit Assignment Rewards}
\label{sec:method}
This section details our proposed method, \methodfullname{} (\methodacronym). We first review the standard RLHF setup and the inherent reward sparsity problem. We then introduce a game-theoretic perspective on text generation for credit assignment, define \methodacronym{} based on Shapley values, and finally discuss efficient approximation techniques.

\subsection{RLHF and Reward Sparsity}
\label{subsec:rlhf_sparsity}

We formulate the autoregressive text generation process using the standard Markov Decision Process (MDP) formalism, denoted by $\mathcal{M} = (\mathcal{S}, \mathcal{A}, P, R, \gamma)$.
The process begins in an initial state $s_0 \in \mathcal{S}$, which corresponds to the input prompt $x$. At each discrete time step $t \in \{0, 1, \dots, T-1\}$, the agent (the language model) is in state $s_t$, representing the concatenation of the prompt and the sequence generated thus far: $s_t = x \oplus y_{1:t}$ (where $y_{1:0}$ denotes an empty sequence, so $s_0=x$, and $\oplus$ denotes token concatenation). 
The agent selects an action $a_t \in \mathcal{A}$, which corresponds to choosing the next token $y_{t+1}$ from the vocabulary $\mathcal{V}$ according to its policy $\pi_\theta(a_t | s_t)$, parameterized by $\theta$.
We identify the action space with the vocabulary, i.e., $\mathcal{A}=\mathcal{V}$.
The state transition function $P(s_{t+1} | s_t, a_t)$ is deterministic in this context. Upon taking action $a_t=y_{t+1}$ in state $s_t$,
the system transitions to next state $s_{t+1} = s_t \oplus a_t = x \oplus y_{1:t+1}$. This generation process continues until a maximum sequence length $T$ is reached or a designated end-of-sequence (EOS) token is generated. 
For notational simplicity, we often assume a fixed horizon $T$. The discount factor $\gamma$ is typically set to 1 in finite-horizon text generation tasks.

In standard RLHF pipeline \citep{NIPS2017_d5e2c0ad, NEURIPS2020_1f89885d, NEURIPS2022_b1efde53}, a reward model $R_\phi$, parameterized by $\phi$, is trained beforehand on a dataset of human preferences $\mathcal{D}_{\text{pref}} = \{(y^w, y^l)_i\}$, where $y^w$ is preferred over $y^l$ by human annotators. The reward model assigns a scalar score $r_\phi(x, y)$ reflecting the quality or preference level of a \emph{completed} sequence $y$ given the prompt $x$.

To stabilize training and prevent the policy $\pi_\theta$ from drifting too far from a reference distribution (often the initial pre-trained LLM, denoted $\pi_{\text{ref}}$), the reward signal used for optimization is typically augmented with a penalty term at each step $t$. A common choice is the Kullback-Leibler (KL) divergence between the current policy $\pi_\theta$ and the reference policy $\pi_{\text{ref}}$. The standard objective is often formulated as:
\begin{equation}
\label{eq:rlhf_objective}
\mathcal{J}(\theta) = \mathbb{E}_{x \sim \mathcal{D}, y \sim \pi_\theta} \left[ \sum_{t=1}^T R^{\text{orig}}_t \right]
\end{equation}
where $\cal D$ is the dataset and the reward at each timestep $t$ in the standard sparse setup is given by
\begin{equation}
\label{eq:original_reward}
R^{\text{orig}}_t = R^{\text{KL}}_t + \mathbb{I}(t=T) \cdot r_\phi(x, y)
\end{equation}
Here, $R^{\text{KL}}_t = -\beta \log(\pi_\theta(y_t|x, y_{<t}) / \pi_{\text{ref}}(y_t|x, y_{<t}))$ is the KL penalty associated with timestep $t$, $\beta$ is the KL coefficient, and $\mathbb{I}(t=T)$ is an indicator function ensuring the reward model score $r_\phi(x, y)$ is assigned only at the final timestep $T$. This terminal assignment makes the reward signal inherently \emph{sparse}, posing a significant challenge for credit assignment during RL training. Such sparsity directly undermines the efficiency of the learning process and frequently leads to the convergence to suboptimal policies \citep{ng1999policy, NIPS2016_afda3322, NEURIPS2023_b8c90b65}. To overcome this, a principled approach to redistribute the terminal reward more densely across the generation steps is desirable. 


\subsection{A Game-Theoretic Framework for Credit Assignment}
\label{subsec:game_framework}
We frame the generation of a sequence $y$ for a given prompt $x$ as a cooperative game. Let the generated text $y$ be segmented into $N$ contiguous units, $y = (u_1, u_2, \dots, u_N)$. These units could be tokens, spans, sentences, or paragraphs depending on the task. The ``players'' in this game are these $N$ text units. Let $\mathcal{P} = \{u_1, \dots, u_N\}$ be the set of players.

\paragraph{Characteristic Function.}
The value of cooperation among a subset (coalition) $S \subseteq \mathcal{P}$ of players is defined by a characteristic function $v: 2^\mathcal{P} \to \mathbb{R}$. This function should quantify the collective contribution of the units in $S$ towards the final reward objective. We define $v(S)$ based on the reward model's evaluation of the partial text sequence formed by concatenating the units $\{u_i \mid u_i \in S\}$ in their original order. Let $y_S$ denote this concatenated partial sequence. Then, the value function is defined as:
\begin{equation}
\label{eq:characteristic_function}
v(S) = r_\phi(x, y_S)
\end{equation}
For the empty set, $v(\emptyset) = 0$. The value of the grand coalition $v(\mathcal{P})$ corresponds to the original sparse reward for the complete sequence, $v(\mathcal{P}) = r_\phi(x, y)$. 
Note that evaluating $r_\phi$ on partial sequences $y_S$  requires careful consideration, as $y_S$ represents an incomplete sequence. Ideally, $v(S)$ could represent the expected reward obtained by keeping the units in $S$ fixed and sampling the remaining units from the current policy $\pi_\theta$. In our implementation, we resort to a practical approximation. To evaluate $v(S)$, we construct a sequence in which the tokens belonging to units $u_i \in S$ are placed in their original order. The positions corresponding to units $u_j \notin S$ are filled using empty spaces.

\paragraph{Shapley Value Calculation.}
The Shapley Value $\mathrm{SV}_{u_i}(v)$ for a player $u_i \in \mathcal{P}$ (text unit $u_i$) quantifies its fair contribution to the grand coalition value $v(\mathcal{P})$, calculated as the average marginal contribution of player $u_i$ over all possible permutations of player arrivals:
\begin{equation}
\label{eq:shapley_value}
\mathrm{SV}_{u_i}(v) = \sum_{S \subseteq \mathcal{P} \setminus \{u_i\}} \frac{|S|! \, (N - |S| - 1)!}{N!} [v(S \cup \{u_i\}) - v(S)].
\end{equation}
The Shapley values uniquely satisfies axioms such as \textit{efficiency} ($ \sum_{u_i \in \mathcal{P}} \mathrm{SV}_{u_i}(v) = v(\mathcal{P}) $), \textit{symmetry} (equal reward for equal contribution), \textit{linearity}, and the \textit{null player} property (no contribution means no credit), making it a principled choice for fair credit allocation \citep{shapley1953stochastic}.

\subsection{Shapley Values as Dense Rewards} 
We use the calculated Shapley values to define a dense reward signal for the RL agent. Let unit $u_i$ consist of tokens generated from timestep $t_{i-1}+1$ up to and including timestep $t_i$ (with $t_0=0$, so $t_i$ marks the completion timestep of unit $u_i$). We assign the Shapley value $\mathrm{SV}_{u_i}(v)$ associated with unit $u_i$ as an additional reward component specifically at the timestep $t_i$ marking the completion of that unit. Let $R^{\text{shap}}_t$ denote this Shapley-based reward at timestep $t$. Then,
\begin{equation}
\label{eq:srs_reward_time}
R^{\text{shap}}_t = \begin{cases} \mathrm{SV}_{u_i}(v) & \text{if } t = t_i \text{ for some unit } u_i \\ 0 & \text{otherwise} \end{cases}
\end{equation}
This component distributes the total reward $r_\phi(x,y)$ across the episode based on the Shapley contributions, since $\sum_{t=1}^T R^{\text{shap}}_t = \sum_{i=1}^N \mathrm{SV}_{u_i}(v) = r_\phi(x,y)$ (due to the efficiency property, where $N$ is the total number of units).

We then define the total reward $R_t$ provided to the RL agent at timestep $t$ as a convex combination of the dense Shapley-based credit allocation and the original sparse terminal reward, while retaining the per-step KL penalty. Using a hyperparameter $\alpha \in [0, 1]$, the total reward is:
\begin{equation}
\label{eq:total_dense_reward_convex}
R_t(\alpha) = R^{\text{KL}}_t + \alpha \cdot R^{\text{shap}}_t + (1 - \alpha) \cdot \mathbb{I}(t=T) \cdot r_\phi(x, y)
\end{equation}
Here, $\alpha$ controls the interpolation:
\begin{itemize}
    \item If $\alpha = 0$, $R_t(0) = R^{\text{orig}}_t$, recovering the standard sparse reward signal.
    \item If $\alpha = 1$, $R_t(1) = R^{\text{KL}}_t + R^{\text{shap}}_t$. The terminal reward $r_\phi(x,y)$ is entirely replaced by an equivalent value distributed densely according to Shapley contributions throughout the episode.
    \item If $0 < \alpha < 1$, the agent receives both the intermediate Shapley-based rewards and a residual portion of the original terminal reward.
\end{itemize}
This formulation allows flexible control over the density of the reward signal, balancing immediate feedback with the final outcome signal.

\begin{theorem}[Policy Invariance under \methodacronym{} Reward Shaping]
    Consider a parameterized language model $\pi_\theta$ with a learned reward model $R_\phi$. Let $\mathcal{M} = (\mathcal{S}, \mathcal{A}, P, R^{\text{orig}}_t, \gamma)$ be the original MDP with its reward from the reward model and $\widehat{\mathcal{M}} = (\mathcal{S}, \mathcal{A}, P, R_t(\alpha), \gamma)$ be the MDP with dense Shapley reward. If $\pi_\theta$ is optimal for $\widehat{\mathcal{M}}$, then $\pi_\theta$ is also optimal for $\mathcal{M}$, and vice versa.
\end{theorem}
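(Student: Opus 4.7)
The plan is to establish trajectory-wise return equality between the shaped and original rewards. This immediately upgrades to expected-return equality for every policy, and hence to the two-way policy invariance claim. A direct potential-based reward shaping argument in the classical Ng et al.\ sense is awkward here, because $R^{\text{shap}}_t$ at a unit-completion step $t_i$ depends on $\mathrm{SV}_{u_i}(v)$, which is a functional of the \emph{entire} generated sequence through the characteristic function $v$, so the shaping is not a per-transition potential of the form $\Phi(s_{t+1})-\Phi(s_t)$. The trajectory-level argument I sketch avoids this difficulty.

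First I would invoke the efficiency axiom of the Shapley value, $\sum_{u_i \in \mathcal{P}} \mathrm{SV}_{u_i}(v) = v(\mathcal{P})$. By Equation~\ref{eq:characteristic_function} applied to the grand coalition, $v(\mathcal{P}) = r_\phi(x,y)$. Combined with Equation~\ref{eq:srs_reward_time}, which places $\mathrm{SV}_{u_i}(v)$ at time $t_i$ and zero elsewhere, this yields $\sum_{t=1}^T R^{\text{shap}}_t = r_\phi(x,y)$ pointwise on every trajectory $(x,y)$.

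Next I would substitute into Equation~\ref{eq:total_dense_reward_convex} and sum over $t$:
\begin{equation*}
\sum_{t=1}^T R_t(\alpha) \;=\; \sum_{t=1}^T R^{\text{KL}}_t \;+\; \alpha\, r_\phi(x,y) \;+\; (1-\alpha)\, r_\phi(x,y) \;=\; \sum_{t=1}^T R^{\text{KL}}_t \;+\; r_\phi(x,y),
\end{equation*}
which matches $\sum_{t=1}^T R^{\text{orig}}_t$ from Equation~\ref{eq:original_reward} identically in $\alpha$; the KL term is the same per-step functional of $\pi_\theta$ and $\pi_{\text{ref}}$ in both MDPs and hence cancels. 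Taking expectations over $x\sim\mathcal{D}$ and $y\sim\pi_\theta$ gives $\mathcal{J}_{\widehat{\mathcal{M}}}(\theta)=\mathcal{J}_{\mathcal{M}}(\theta)$ for all $\theta$, so their argmax sets over policy parameters coincide, proving both directions.

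The only conceptual hurdle is that $R^{\text{shap}}_t$ is non-Markovian (Shapley weights at $t_i$ peek at tokens produced after $t_i$), which would block a naive PBRS proof. But the trajectory-level return equality is strictly stronger than what PBRS provides, and it relies only on the \textbf{efficiency} axiom of the Shapley value, so the other axioms (symmetry, linearity, null player) play no role in this particular invariance argument, even though they justify the \emph{choice} of Shapley weights on fairness grounds.
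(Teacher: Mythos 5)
Your proof is correct and follows essentially the same route as the paper's: both rest on the efficiency axiom to obtain $\sum_{t=1}^T R^{\text{shap}}_t = r_\phi(x,y)$ on every trajectory, deduce that the total episode return under $R_t(\alpha)$ equals that under $R^{\text{orig}}_t$, and conclude that the two objectives coincide for every policy, so the optimal policies agree in both directions. The paper presents this under the banner of potential-based reward shaping but, like you, never actually exhibits a potential function---its computation is exactly your trajectory-level return equality---so your explicit remark that the Shapley term is not a per-transition potential difference is a fair clarification of the same argument rather than a genuinely different one.
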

\begin{proof} 
See Appendix \ref{sec:optimality}. 
\end{proof}


\subsection{Efficient Approximation of Shapley Values}
\label{sec:shap_approx}

The direct calculation of Shapley values using Equation~\eqref{eq:shapley_value} necessitates evaluating the characteristic function $v(S)$ (defined in Eq.~\eqref{eq:characteristic_function}) for all $2^N$ possible coalitions $S$ of the $N$ text units. This exponential complexity renders exact computation practically infeasible for typical sequence lengths encountered in text generation \cite{shapley1953stochastic}. To make \methodacronym{} practical, we employ two key strategies: adaptive segmentation of text into units  and efficient approximation of their Shapley values.

\paragraph{Adaptive Text Segmentation as Players.}
The definition of ``players'' (text units $u_i$) in our cooperative game is crucial for both interpretability and computational tractability. We adapt the granularity of these units based on the task and the typical length of the generated responses, aiming to keep the number of players $N$ manageable. We experiment with three levels of segmentation:
\begin{itemize}
    \item \textbf{Token-level:} For tasks producing very short responses, each token $y_t$ can be treated as an individual player $u_i$. This offers the finest granularity but results in a larger $N$.
    \item \textbf{Span-level:} For medium-length responses, we leverage constituency parsing \citep{marcus-etal-1993-building} to establish a hierarchical grammatical structure over the generated sequence $y$. This process yields a constituency tree where tokens (leaf nodes) are organized into hierarchically nested constituents. Players are then defined as these syntactic constituents (e.g., noun phrases, verb phrases), formed by grouping tokens that share a common parent or ancestor node within this tree. This approach reduces $N$ while preserving semantic coherence within each player unit, as constituents are inherently meaningful linguistic units.\footnote{\url{https://www.nltk.org/howto/parse.html}}
    \item \textbf{Sentence-level:} For tasks generating longer, multi-sentence responses, each sentence in the output $y$ constitutes a player. Segmentation is achieved using standard sentence boundary detection. This approach markedly reduces $N$, especially for verbose outputs.
\end{itemize}
The choice of segmentation strategy is a hyperparameter, allowing a trade-off between the granularity of credit assignment and the computational cost of Shapley value estimation.

\paragraph{Approximation Using Owen Values.}
To ensure the practical applicability of \methodacronym{}, we employ an approximation scheme based on Owen value \citep{owen1977values}, which is a coalitional extension of Shapley values designed for games where players are grouped into a predefined coalition structure\citep{aumann1974cooperative}.
For our task, a hierarchical structure $\mathcal{B}$ is imposed on the sequence of $N$ text units, achieved by applying a heuristic parsing algorithm to the units. This partition $\mathcal{B}$ defines nested groupings of the units. The Owen value is then computed with respect to this partition $\mathcal{B}$. Marginal contributions are evaluated by forming coalitions structurally: combinations involving subsets within a unit's own group are explored, while units belonging to other groups in the partition are treated as indivisible blocks, as they are either entirely included or entirely excluded from a coalition, rather than exploring all their individual subsets.
By limiting the evaluation to coalitions dictated by the partition structure $\mathcal{B}$, the number of required characteristic function evaluations (reward model queries) is substantially reduced compared to the exact Shapley computation. Consequently, the computational complexity is reduced from exponential, $O(2^N)$, to quadratic in $N$, rendering the approach tractable. We use the SHAP package \cite{NIPS2017_7062} for Shapley values and Owen values computation.

\section{Experiments}
\label{sec:experiments}
In this section, we empirically evaluate the effectiveness of \methodacronym{} across three distinct tasks characterized by varying response lengths. Our primary objective is to demonstrate that \methodacronym{} enables more efficient and effective training compared to standard sparse RLHF and alternative dense reward baselines.

\begin{figure}[t]
\begin{small}
\begin{tabular}{@{}lp{\dimexpr0.915\textwidth-2\tabcolsep\relax}@{}} 
\toprule
\textbf{Prompt:} & ``While some scenes were'' \\
\midrule[1pt] 

\textbf{Sparse:} &
  \plainscore{initially}{0.0}\hspace{0.4em}%
  \plainscore{disturbing}{0.0}\hspace{0.4em}%
  \plainscore{to}{0.0}\hspace{0.4em}%
  \plainscore{sit}{0.0}\hspace{0.4em}%
  \plainscore{through,}{0.0}\hspace{0.4em}%
  \plainscore{they}{0.0}\hspace{0.4em}%
  \plainscore{ultimately}{0.0}\hspace{0.4em}%
  \plainscore{contributed}{0.0}\hspace{0.4em}%
  \plainscore{to}{0.0}\hspace{0.4em}%
  \plainscore{a}{0.0}\hspace{0.4em}%
  \plainscore{deeply}{0.0}\hspace{0.4em}%
  \plainscore{powerful}{0.0}\hspace{0.4em}%
  \plainscore{and}{0.0}\hspace{0.4em}%
  \plainscore{moving}{0.0}\hspace{0.4em}%
  \plainscore{story}{0.0}\hspace{0.4em}%
  \plainscore{.}{0.0}\hspace{0.4em}%
  \plainscore{\texttt{<EOS>}}{+10.0} \\
\midrule

\textbf{ABC:} &
  \hlscore{initially}{0.0}{poscontrib!1!white}\hspace{0.4em}%
  \hlscore{disturbing}{0.0}{poscontrib!1!white}\hspace{0.4em}%
  \hlscore{to}{0.0}{poscontrib!1!white}\hspace{0.4em}%
  \hlscore{sit}{0.0}{poscontrib!1!white}\hspace{0.4em}%
  \hlscore{through,}{0.0}{poscontrib!1!white}\hspace{0.4em}%
  \hlscore{they}{0.0}{poscontrib!1!white}\hspace{0.4em}%
  \hlscore{ultimately}{0.0}{poscontrib!1!white}\hspace{0.4em}%
  \hlscore{contributed}{0.0}{poscontrib!1!white}\hspace{0.4em}%
  \hlscore{to}{0.0}{poscontrib!1!white}\hspace{0.4em}%
  \hlscore{a}{0.0}{poscontrib!1!white}\hspace{0.4em}%
  \hlscore{deeply}{+0.8}{poscontrib!18!white}\hspace{0.4em}%
  \hlscore{powerful}{+0.8}{poscontrib!18!white}\hspace{0.4em}%
  \hlscore{and}{+0.2}{poscontrib!12!white}\hspace{0.4em}%
  \hlscore{moving}{+1.3}{poscontrib!23!white}\hspace{0.4em}%
  \hlscore{story}{+1.6}{poscontrib!26!white}\hspace{0.4em}%
  \hlscore{.}{+5.3}{poscontrib!63!white}\hspace{0.4em}%
  \hlscore{\texttt{<EOS>}}{0.0}{zerocontrib!15!white} \\
\midrule

\textbf{Ours:} &
  \hlscore{initially}{+0.8}{poscontrib!18!white}\hspace{0.4em}%
  \hlscore{disturbing}{-0.5}{negcontrib!15!white}\hspace{0.4em}%
  \hlscore{to}{-0.1}{negcontrib!11!white}\hspace{0.4em}%
  \hlscore{sit}{-1.8}{negcontrib!28!white}\hspace{0.4em}%
  \hlscore{through,}{-0.8}{negcontrib!18!white}\hspace{0.4em}%
  \hlscore{they}{-0.2}{negcontrib!12!white}\hspace{0.4em}%
  \hlscore{ultimately}{+1.0}{poscontrib!11!white}\hspace{0.4em}%
  \hlscore{contributed}{+0.7}{poscontrib!17!white}\hspace{0.4em}%
  \hlscore{to}{0.0}{poscontrib!1!white}\hspace{0.4em}%
  \hlscore{a}{+0.5}{poscontrib!15!white}\hspace{0.4em}%
  \hlscore{deeply}{+1.8}{poscontrib!28!white}\hspace{0.4em}%
  \hlscore{powerful}{+2.2}{poscontrib!32!white}\hspace{0.4em}%
  \hlscore{and}{+0.5}{poscontrib!15!white}\hspace{0.4em}%
  \hlscore{moving}{+4.0}{poscontrib!50!white}\hspace{0.4em}%
  \hlscore{story}{+0.7}{poscontrib!17!white}\hspace{0.4em}%
  \hlscore{.}{+0.8}{poscontrib!18!white}\hspace{0.4em}%
  \hlscore{\texttt{<EOS>}}{0.0}{zerocontrib!15!white} \\
 
\bottomrule
\end{tabular}
\end{small}
\caption{Comparison of reward distribution strategies for an example generated sequence. Sparse RLHF assigns the total reward at the end. \methodacronym{} and ABC distribute this reward across tokens/spans based on their respective methodologies, shown with background highlights (color hue for sign, intensity for magnitude; more intense/saturated means higher absolute contribution) and numerical scores.}
\label{fig:reward_distribution_comparison}
\end{figure}

\subsection{Experimental Setup}
\label{subsec:setup}
\paragraph{Evaluation Tasks and Models.}
We evaluate our proposed method across three diverse tasks prevalent in RLHF research: sentiment control, text summarization, and instruction tuning \citep{chan2024dense, cao-etal-2024-enhancing, yoon-etal-2024-tlcr}. For sentiment control and instruction tuning, we utilize the implementation from \citet{chan2024dense}. However, for summarization, due to difficulties in reproducing the results, we switched to the implementation from \citet{huang2024the}. The datasets, policy models, and reward models used for each task are described in more detail below.

\textbf{Sentiment Control:} The objective is to generate positive reviews of movies. We use the IMDB dataset \citep{maas-etal-2011-learning}. The policy model is GPT-2 small \citep{radford2019language}, initialized by fine-tuning for one epoch on the IMDB training set. During RLHF training, prompts are generated by randomly selecting the first 4 to 8 tokens from reviews in the training data. The reward signals are provided using a pre-trained sentiment classifier, same as \citep{chan2024dense}.

\textbf{Text Summarization:}
We evaluate our method on the automatic text summarization task, following prior work \citep{NEURIPS2020_1f89885d, chan2024dense, lee2023rlaif}. For this evaluation, we use the Reddit TL;DR dataset \citep{volske-etal-2017-tl}, specifically the filtered version provided by \citet{NEURIPS2020_1f89885d}, which includes approximately 116K training examples, 6K validation examples, and 6K test examples. Our policy model is Pythia-1B \citep{biderman2023pythia}, which we initialize via supervised fine-tuning on the training set for 2,000 steps with a batch size of 64. Additionally, we train a 1B-parameter reward model (initialized using the SFT model) on 92K human preference pairs, achieving approximately 74\% accuracy on the validation set.

\textbf{Instruction Turning:} We evaluate models on the task of following user instructions. To do this, we fine-tune language models using the helpfulness subset of the Anthropic Helpful and Harmless (HH) dataset \citep{bai2022training}, which contains 43K human-written prompts paired with model responses that have been ranked by human annotators for helpfulness. Preference is based on which response is more informative and helpful for the task. The policy model is initialized using the OpenLLaMA-7B model \citep{openlm2023openllama}, an open-source reproduction of Meta's LLaMA collection \citep{touvron2023llama} trained on fully open-source dataset. For the reward model, we use the 3B reward model provided by \citet{dong2023raft}. This reward model was trained using the same HH dataset, where it learns to assign a scores to candidate completions based on their predicted usefulness.

\paragraph{Baselines.} We compare our proposed method against three key baselines. \textbf{RLHF} represents the standard approach using the sparse terminal reward (Eq.~\ref{eq:original_reward}) with KL regularization. \textbf{ABC (Attention Based Credit)} \citep{chan2024dense} uses reward model attention scores for dense rewards distribution. \textbf{Uniform} is a baseline that distributes the terminal reward evenly across all tokens. 
For fair comparison, all methods are optimized using the PPO objective \citep{schulman2017proximal} with consistent hyperparameters, detailed in Appendix~\ref{appendix:parameters}. All methods  initialize their policy models using the same SFT checkpoint to ensure a common starting point. All experiments were conducted on a single A100 GPU (80GB VRAM), and results are averaged over $5$ random seeds.


\paragraph{Evaluation Metrics.} We track the average reward $r_\phi(x, y)$ per episode during training to evaluate learning speed and the level of convergence. The final performance is reported as the mean reward on the test set after convergence. For the summarization task, we additionally employ \textbf{LLM-as-a-judge} \citep{zheng2023judging} evaluation to compare the quality (e.g., accuracy, coverage, conciseness, clarity, and coherence) of summaries generated by models trained with different methods. We randomly sampled 1K summaries from the TL;DR test set for LLM evaluation. To mitigate potential positional bias in these pairwise comparisons, we randomize the presentation order of summaries. For the instruction-tuning task, we use AlpacaEval \citep{alpaca_eval} to compare the quality of 1K model's response. AlpacaEval is designed to better handle potential issues such as length bias, thereby providing a more reliable assessment of response quality.
The prompt used for evaluation can be found in the Appendix~\ref{appendix:llm_eval_prompt}.

\subsection{Results}
\label{sec:results}
Figure~\ref{fig:reward_distribution_comparison} provides a qualitative illustration of how \methodacronym{} distributes rewards compared to sparse RLHF and ABC for an example generated sequence. Sparse RLHF, by definition, assigns the entire reward only at the end of the sequence. ABC, which uses attention scores from the reward model's final layer to distribute rewards, tends to concentrate rewards on tokens near the end of the sequence. As seen in the example, significant credit is assigned to the final punctuation mark (``.''), while earlier, crucial tokens receive almost zero attention scores. Furthermore, standard attention scores are non-negative, making it difficult for ABC to assign explicit negative credit to tokens or spans that detract from the output quality. For instance, a phrase like ``\textit{disturbing to sit through}'' that negatively impact the perceived sentiment, would not receive negative rewards from ABC. In contrast, \methodacronym{} can assign both positive and negative rewards to tokens based on their game-theoretic marginal contribution.

\begin{figure}[t] 
    \centering 
    
    \begin{subfigure}[b]{0.33\textwidth} 
        \centering
         \includegraphics[width=\textwidth]{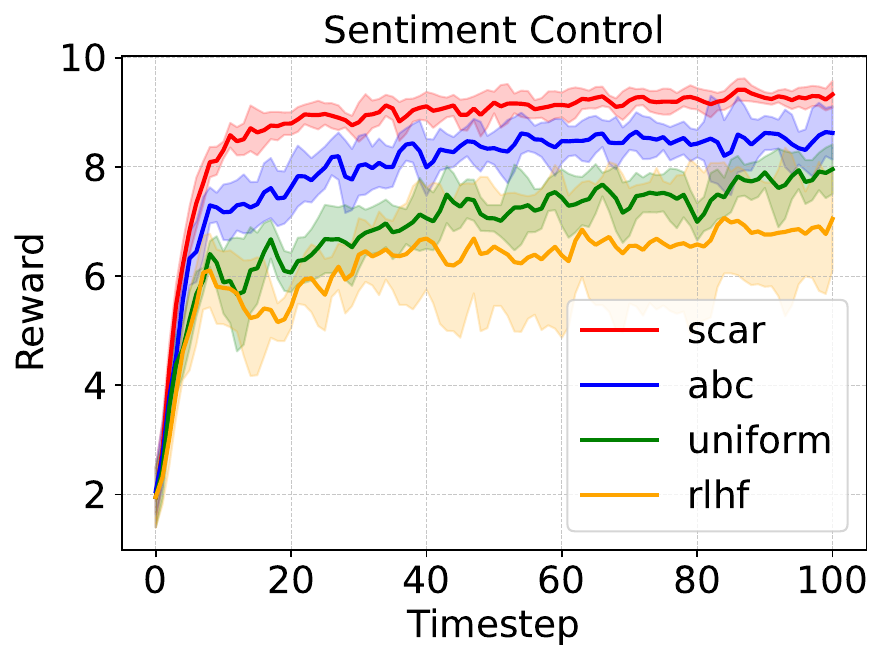}
        \label{fig:sub1}
    \end{subfigure}
    \hfill 
    \begin{subfigure}[b]{0.33\textwidth}
        \centering
        \includegraphics[width=\textwidth]{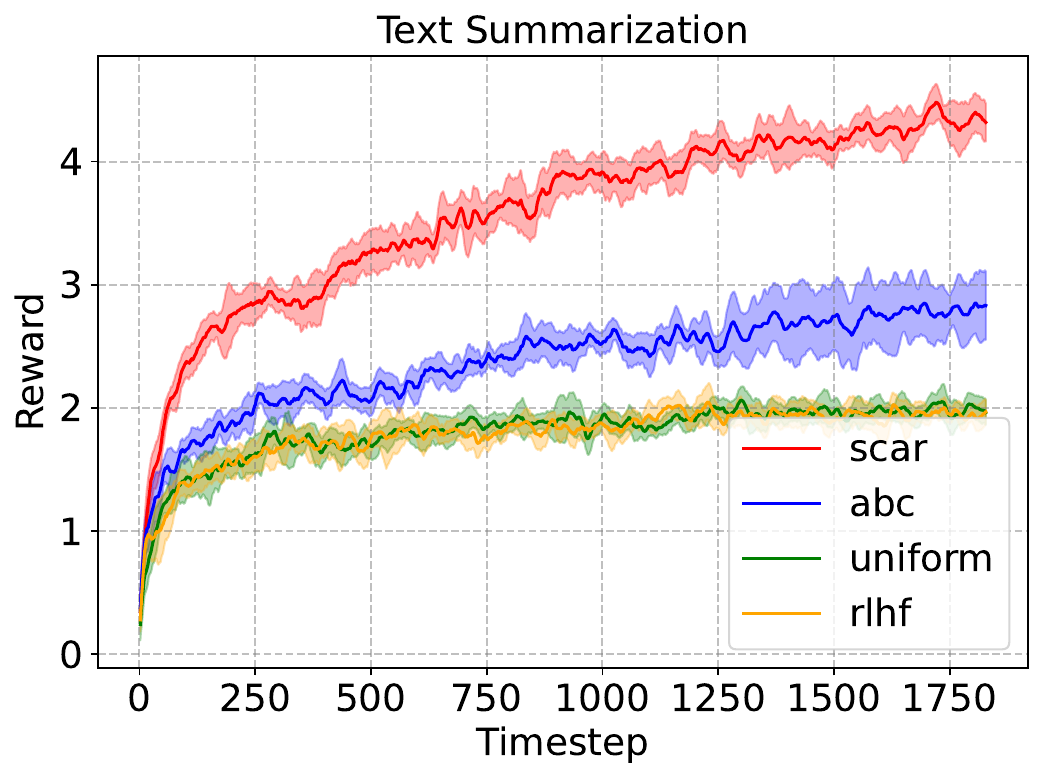} 
        \label{fig:sub2}
    \end{subfigure}
    \hfill 
    \begin{subfigure}[b]{0.33\textwidth}
        \centering
        \includegraphics[width=\textwidth]{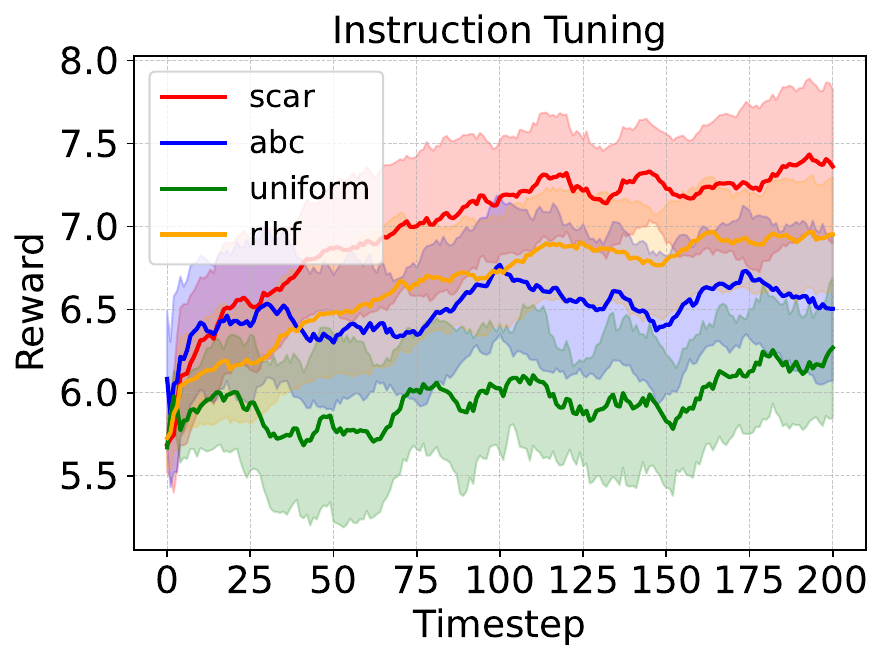} 
        \label{fig:sub3}
    \end{subfigure}
    \caption{Average reward per timestep during RLHF training for sentiment control (left), text summarization (center), and instruction tuning (right). Curves show the mean reward across five random seeds, with shaded regions representing the standard deviation. \methodacronym{} consistently demonstrates faster convergence and achieves higher or comparable final reward levels compared to sparse RLHF, Uniform reward distribution, and Attention-Based Credit (ABC) baselines.}
    \label{fig:reward_curves}
\end{figure}


\begin{table*}[htbp]
\centering
\begin{adjustbox}{width=0.75\textwidth, center} 
\begin{tabular}{lccccc}
\toprule
Task & Sparse RLHF & Uniform & ABC & \methodacronym{} (ours) \\
\midrule
IMDB & 6.86$\pm$ 0.86 & 7.73 $\pm$ 0.02 & 8.48 $\pm$ 1.60 & \textbf{9.27 $\pm$ 0.00} \\
TL;DR & 1.60 $\pm$ 0.11 & 1.68 $\pm$ 0.02 & 2.85 $\pm$ 0.21 & \textbf{4.35 $\pm$ 0.11} \\
HH-RLHF & 6.93 $\pm$ 0.00 & 6.17 $\pm$ 0.00 & 6.59 $\pm$ 0.01 & \textbf{7.31 $\pm$ 0.01} \\
\bottomrule
\end{tabular}
\end{adjustbox} 
\caption{Average reward scores for the trained policy on the test sets for sentiment control (IMDB), text summarization (TL;DR), and instruction tuning (Anthropic HH). Higher scores indicate better performance. Results are averaged over 5 random seeds. Best performance per task is in \textbf{bold}.}
\label{tab:main_results}
\end{table*}

\begin{wraptable}{r}{0.5\textwidth} 
\centering
\footnotesize
\begin{tabular}{@{}lcc@{}} 
\toprule
~ & Baselines & Win (\%) \\
\midrule
\multicolumn{3}{@{}l}{\textit{Text Summarization (Reddit TL;DR)}} \\ 
& vs. RLHF & 61.2\% \\
& vs. ABC & 60.3\% \\
\midrule
\multicolumn{3}{@{}l}{\textit{Instruction Tuning (Anthropic HH)}} \\ 
& vs. RLHF & 56.3\% \\
& vs. ABC & 54.9\% \\
\bottomrule
\end{tabular}
\caption{LLM-as-Judge pairwise win rates for \methodacronym{} against baselines.}
\label{tab:llm_judge_merged}
\end{wraptable}

As depicted in Figure~\ref{fig:reward_curves}, \methodacronym{} consistently demonstrated advantages over three baseline methods in terms of learning speed and convergence across all three tasks.
This consistent pattern across diverse tasks suggests that the principled, Shapley value-based credit assignment offered by our method effectively improves learning efficiency and enhances the final policy performance. Table~\ref{tab:main_results} presents the average reward scores achieved by each method on the held-out test sets for the three tasks. As shown in the table, \methodacronym{}-tuned policy consistently achieves the highest performance compared to the baselines.

For summarization, we use \texttt{gemini-2.5-pro} to compare anonymized model outputs (\methodacronym{} vs. baselines) on coherence, relevance, conciseness, and overall quality (evaluation prompt in Appendix~\ref{appendix:llm_eval_prompt}).
As shown in Table~\ref{tab:llm_judge_merged}, summaries generated by the \methodacronym{}-tuned model were preferred over those from the ABC-tuned model in 60.3\% and over the sparse RLHFC-tuned model in 61.2\%.
For the instruction tuning task, we leveraged AlpacaEval \citep{alpaca_eval} and used \texttt{gpt-4-turbo} as an LLM judge, to conduct robust pairwise evaluations. These evaluations assessed helpfulness, harmlessness, and adherence to instructions. \methodacronym{}-generated responses achieved a win rate of 54.9\% when compared against ABC, and a win rate of 56.3\% against standard sparse RLHF. These results provide further evidence that the improvements from \methodacronym{} translate to genuinely higher-quality outputs according to human-like preferences.

\subsection{Analysis}

\paragraph{Reward-KL Tradeoff.}
\begin{figure}[h] 
  \centering
  \includegraphics[width=0.65\linewidth]{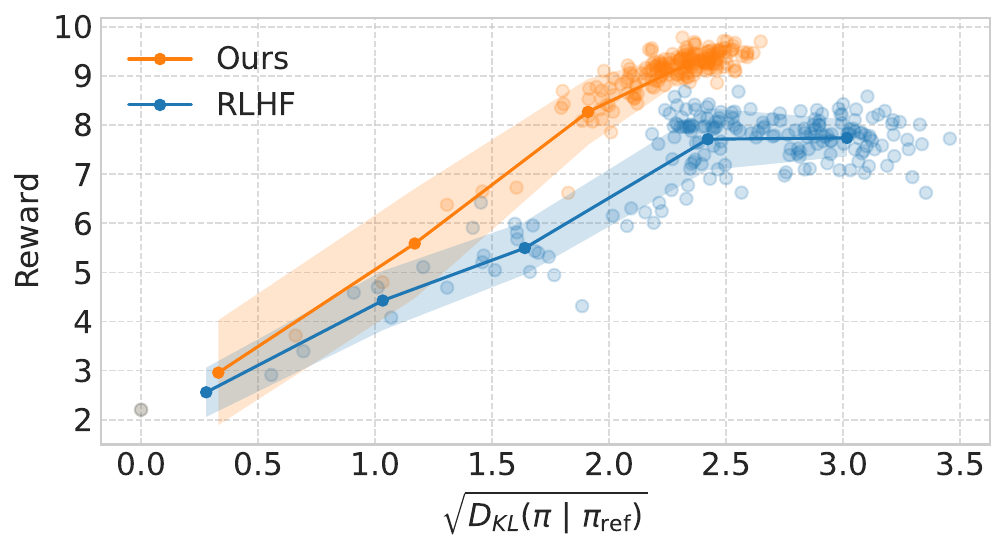} 
  \caption{Reward-KL tradeoff on the sentiment control (IMDB) task. The y-axis represents the average per-batch reward during training and the x-axis shows the square root of the KL divergence between the learned policy ($\pi$) and the reference policy ($\pi_{\text{ref}}$).} 
  \label{fig:reward_kl_tradeoff}
\end{figure}

A crucial aspect of RLHF is managing the trade-off between maximizing the reward signal and maintaining proximity to a reference policy ($\pi_{\text{ref}}$). 
This is often controlled by a KL-divergence penalty term in the reward function (Eq.~\ref{eq:original_reward}), preventing the policy from drifting too far and generating undesirable or out-of-distribution text, or ``reward hacking''.
Figure~\ref{fig:reward_kl_tradeoff} illustrates this trade-off by plotting the achieved reward against the KL-divergence from the reference policy $D_{\text{KL}}(\pi \ || \ \pi_{\text{ref}})$. The plot demonstrates that our method achieves a more favorable reward-KL frontier compared to the standard sparse RLHF baseline. 
This improved frontier suggests that the principled credit assignment from \methodacronym{} enables the improvements are genuine and do not come at the cost of generating out-of-distribution or gibberish text.

\paragraph{Computational Efficiency.}
\begin{figure}
  \centering
  \includegraphics[width=0.5\linewidth]{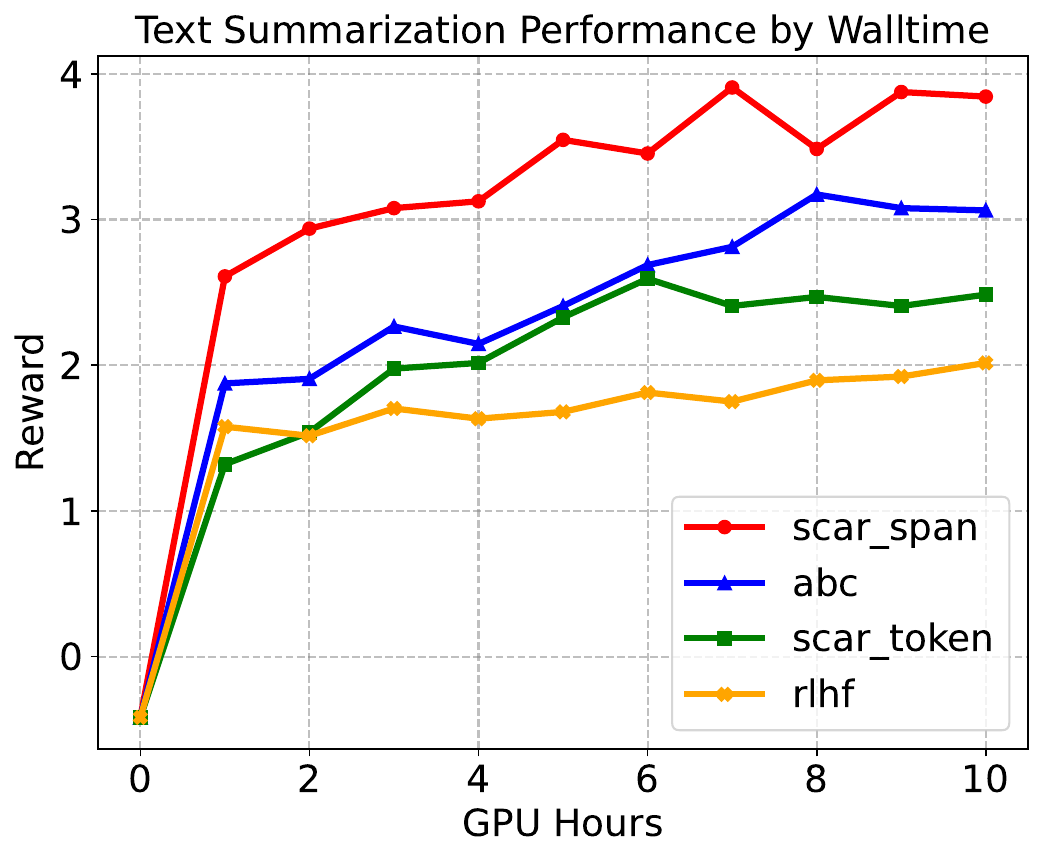} 
  \caption{Rewards/GPU hours curves on the TL;DR dataset. We sampled one run from each method. The y-axis represents the reward and the x-axis shows the GPU hours used for training.}
  \label{fig:time}
\end{figure}
Calculating Shapley values introduces computational overhead compared to sparse RLHF or ABC.
Figure \ref{fig:time} plots reward against GPU hours on the TL;DR task. It shows that token-level \methodacronym{}, due to its per-step cost, lag behind ABC in reward per GPU hour. However, span-level \methodacronym{} demonstrates a much better efficiency profile. Despite the inherent cost of Shapley calculations, its superior sample efficiency allows span-level \methodacronym{} to achieve higher reward levels than baselines within the same GPU time. This highlights that by managing the per-step cost through effective segmentation, \methodacronym{}'s principled dense rewards can lead to greater overall training efficiency.


\section{Limitations \& Conclusion}
\label{sec:conclusion}
We introduced \methodfullname{} (\methodacronym{}), a novel method to address reward sparsity in RLHF by generating dense rewards using Shapley values. Unlike heuristic approaches, \methodacronym{} provides a principled, game-theoretic allocation of the reward model's score to text segments. It preserves the optimal policy via potential-based reward shaping and, empirically, demonstrated significantly faster convergence and superior final performance across multiple tasks compared to standard RLHF and other dense reward baselines.
Despite its strengths, \methodacronym{} has limitations: the computational overhead of Shapley approximations, even with optimizations like Owen values and adaptive segmentation; the assumption that the reward model can meaningfully score partial sequences, which may not suit certain types like rule-based models that only evaluate final answers (e.g., in mathematical reasoning).
Future work will target more efficient approximation techniques, robust and adaptive segmentation methods, and rigorous evaluation on larger-scale language models and broader tasks.

\bibliographystyle{plainnat}
\bibliography{ref}

\appendix
\newpage
\section{Optimality Preservation}
\label{sec:optimality}

We demonstrate that optimizing with the dense reward signal $R_t(\alpha)$ (defined in Eq.~\ref{eq:total_dense_reward_convex}) leads to the same optimal policy as optimizing with the original sparse reward $R^{\text{orig}}_t$ (defined in Eq.~\ref{eq:original_reward}). We leverage the principles of Potential-Based Reward Shaping (PBRS) \citep{ng1999policy}.


\paragraph{Proof of Theorem 3.1}
According to the theory of potential-based reward shaping \citep{ng1999policy}, if a shaped reward $R'$ differs from an original reward $R$ by a potential-based shaping function $F(s, a, s') = \gamma \Phi(s') - \Phi(s)$, where $\Phi$ is a real-valued function of the state $s$ (the potential function) and $\gamma$ is the discount factor, then the optimal policies remain unchanged.

In our context, the state $s_t$ corresponds to the history $(x, y_{<t})$. The transition $s_t \to s_{t+1}$ involves selecting action $y_t$. The discount factor $\gamma=1$. We need to show that the difference between our Shapley values-based reward and the original reward constitutes such a potential function difference.

Let $F_t = R_t(\alpha) - R^{\text{orig}}_t$ be the shaping term added at timestep $t$. Substituting the definitions from Eq.~\ref{eq:total_dense_reward_convex} and Eq.~\ref{eq:original_reward}:
\begin{align*}
F_t &= \left( R^{\text{KL}}_t + \alpha \cdot R^{\text{shap}}_t + (1 - \alpha) \cdot \mathbb{I}(t=T) \cdot r_\phi(x, y) \right) - \left( R^{\text{KL}}_t + \mathbb{I}(t=T) \cdot r_\phi(x, y) \right) \\
&= \alpha R^{\text{shap}}_t + (1 - \alpha) \mathbb{I}(t=T) r_\phi(x, y) - \mathbb{I}(t=T) r_\phi(x, y) \\
&= \alpha R^{\text{shap}}_t - \alpha \mathbb{I}(t=T) r_\phi(x, y)
\end{align*}

Let's consider the total undiscounted return for an episode $y=(y_1, \dots, y_T)$. The total return under the original reward is $G^{\text{orig}} = \sum_{t=1}^T R^{\text{orig}}_t$. The total return under the \methodacronym{} reward is $G^{\text{shap}}(\alpha) = \sum_{t=1}^T R_t(\alpha)$.
We have:
\begin{align*}
G^{\text{shap}}(\alpha) &= \sum_{t=1}^T R_t(\alpha) = \sum_{t=1}^T (R^{\text{orig}}_t + F_t) \\
&= \sum_{t=1}^T R^{\text{orig}}_t + \sum_{t=1}^T F_t \\
&= G^{\text{orig}} + \sum_{t=1}^T \left( \alpha R^{\text{shap}}_t - \alpha \mathbb{I}(t=T) r_\phi(x, y) \right) \\
&= G^{\text{orig}} + \alpha \left( \sum_{t=1}^T R^{\text{shap}}_t \right) - \alpha r_\phi(x, y)
\end{align*}
By the efficiency property of Shapley Values, $\sum_{i=1}^M \mathrm{SV}_i(v) = v(N) = r_\phi(x, y)$. Since $\sum_{t=1}^T R^{\text{shap}}_t = \sum_{i=1}^M \mathrm{SV}_i(v)$, we have $\sum_{t=1}^T R^{\text{shap}}_t = r_\phi(x, y)$.
Substituting this back:
\begin{align*}
G^{\text{shap}}(\alpha) &= G^{\text{orig}} + \alpha r_\phi(x, y) - \alpha r_\phi(x, y) \\
&= G^{\text{orig}}
\end{align*}
The total undiscounted reward accumulated over any complete episode is identical for both the original reward $R^{\text{orig}}_t$ and the \methodacronym{} reward $R_t(\alpha)$.

Since the objective in RL is to maximize the expected total return, $J(\pi) = \mathbb{E}_{\tau \sim \pi} [G]$, and we have shown that $G^{\text{shap}}(\alpha) = G^{\text{orig}}$ for any episode (trajectory) $\tau$, it follows that the expected total returns are identical for any policy $\pi$:
\begin{equation}
J^{\text{shap}}(\pi) = \mathbb{E}_{\tau \sim \pi} [G^{\text{shap}}(\alpha)] = \mathbb{E}_{\tau \sim \pi} [G^{\text{orig}}] = J^{\text{orig}}(\pi)
\end{equation}
Because the objectives $J^{\text{shap}}(\pi)$ and $J^{\text{orig}}(\pi)$ are identical for all policies $\pi$, any policy $\pi^*$ that maximizes $J^{\text{shap}}$ must also maximize $J^{\text{orig}}$, and vice versa.

Therefore, optimizing the policy $\pi_\theta$ using the dense reward $R_t(\alpha)$ is equivalent to optimizing using the original sparse reward $R^{\text{orig}}_t$ in terms of the resulting optimal policy. This ensures that the benefits of denser rewards provided by Shapley values reshaping (e.g., faster convergence, improved stability) do not come at the cost of altering the fundamental goal of the optimization.

\newpage
\section{Additional Experimental Results}
In this section, we present additional experiments that could not be included in the main text due to space limitations.

\subsection{Token-level vs Span-level \methodacronym}
For the text summarization task, we implemented and compared token-level and span-level \methodacronym. As illustrated by their reward curves in Figure~\ref{fig:token_span}, span-level \methodacronym{} achieved performance comparable to that of token-level \methodacronym. However, span-level \methodacronym{} demonstrated significantly greater computational efficiency: on our training platform (one A100 GPU with 80GB VRAM) using the Pythia-1b model, 1K training steps required 48 GPU hours for token-level \methodacronym{} versus only 7 GPU hours for span-level \methodacronym. This makes span-level \methodacronym{} approximately seven times more efficient in this specific task (a more detailed time consumption analysis is provided in Figure \ref{fig:time}). Due to the high computational cost of token-level \methodacronym, a statistical study involving multiple runs was infeasible for this method. Consequently, Figure \ref{fig:token_span} presents a single representative run for each approach to illustrate their comparative performance.
\begin{figure}[h]
  \centering
  \includegraphics[width=0.5\linewidth]{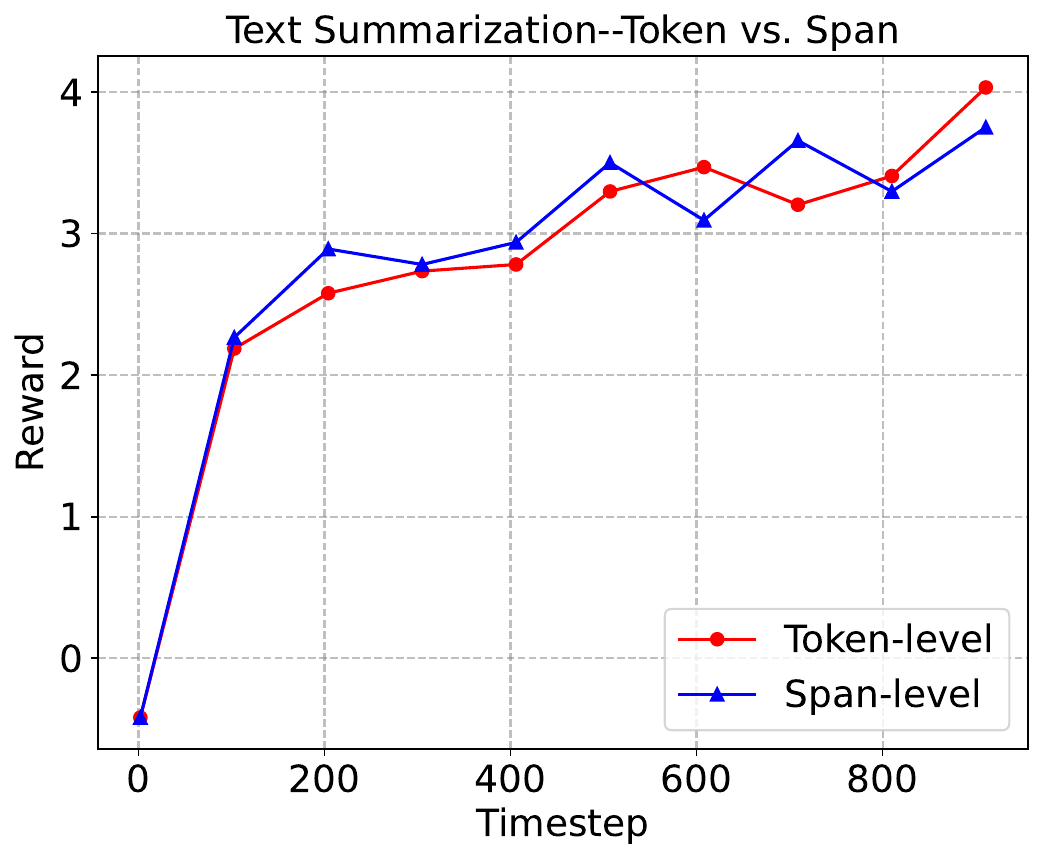} 
  \caption{Comparison between token-level and span-level \methodacronym{} on the text summarization (TL;DR) task. The y-axis represents the reward during training, and the x-axis shows the training timestep.} 
  \label{fig:token_span}
\end{figure}
\newpage
\section{Table of Hyperparameters}
\label{appendix:parameters}
To ensure experimental reproducibility, this section details the hyperparameters employed. Our Proximal Policy Optimization (PPO) implementation for sentiment control and instruction tuning experiments is based on the Hugging Face TRL library\footnote{\url{https://huggingface.co/docs/trl/main/en/ppo_trainer}} and code from \cite{chan2024dense}. For the text summarization task, our implementation is built upon the codebase available at \url{https://github.com/vwxyzjn/summarize_from_feedback_details}.

\begin{table}[h] 
    \centering
    \begin{tabularx}{\linewidth}{@{} X r @{}} 
        \toprule
        \textbf{Hyperparameter} & \textbf{Value} \\
        \midrule
        \multicolumn{2}{@{}l}{\textbf{PPO Hyperparameters}} \\ 
        \cmidrule(r){1-2} 
        Learning rate & 1.41e-5 \\
        Batch size & 128 \\
        Mini-batch size & 128 \\
        Gradient accumulation steps & 1 \\
        KL coefficient ($\beta$) & 0.2 \\
        Shapley value coefficient ($\alpha$) & 0.8 \\
        Discount factor ($\gamma$) & 1 \\
        Clip range & 0.2 \\
        \midrule
        \multicolumn{2}{@{}l}{\textbf{LLM Hyperparameters}} \\
        \cmidrule(r){1-2}
        Min generation length & 16 \\
        Max generation length & 24 \\
        Temperature & 0.7 \\
        Top-p & 1.0 \\
        \bottomrule
    \end{tabularx}
    \captionsetup{skip=6pt} 
    \caption{Hyperparameters used in the sentiment control task.}
    \label{tab:hyp1_revised_caption_below_spaced}
\end{table}

\begin{table}[h]
    \centering
    \begin{tabularx}{\linewidth}{@{} X r @{}}
        \toprule
        \textbf{Hyperparameter} & \textbf{Value} \\
        \midrule
        \multicolumn{2}{@{}l}{\textbf{PPO Hyperparameters}} \\
        \cmidrule(r){1-2}
        Learning rate & 3e-6 \\
        Batch size & 64 \\
        Mini-batch size & 16 \\
        Gradient accumulation step & 4 \\ 
        KL coefficient ($\beta$) & 0.05 \\
        Shapley value coefficient ($\alpha$) & 1.0 \\
        Over-length sequence reward penalty & -1 \\
        Discount factor ($\gamma$) & 1 \\
        Clip range & 0.2 \\
        \midrule
        \multicolumn{2}{@{}l}{\textbf{LLM Hyperparameters}} \\
        \cmidrule(r){1-2}
        Generation length & 53 \\
        Temperature & 0.7 \\
        Top-p & 1.0 \\
        \bottomrule
    \end{tabularx}
    \captionsetup{skip=6pt} 
    \caption{Hyperparameters used in the text summarization task.}
    \label{tab:hyp_summarization} 
\end{table}

\begin{table}[h]
    \centering
    \begin{tabularx}{\linewidth}{@{} X r @{}}
        \toprule
        \textbf{Hyperparameter} & \textbf{Value} \\
        \midrule
        \multicolumn{2}{@{}l}{\textbf{PPO Hyperparameters}} \\
        \cmidrule(r){1-2}
        Learning rate & 1.41e-5 \\
        Batch size & 16 \\
        Mini-batch size & 2 \\
        Gradient accumulation step & 8 \\ 
        KL coefficient ($\beta$) & 0.2 \\
        Shapley value coefficient ($\alpha$) & 0.8 \\
        Discount factor ($\gamma$) & 1 \\
        Clip range & 0.2 \\
        LoRA rank & 32 \\
        LoRA $\alpha$ & 32 \\ 
        LoRA dropout & 0.0 \\
        \midrule
        \multicolumn{2}{@{}l}{\textbf{LLM Hyperparameters}} \\
        \cmidrule(r){1-2}
        Min generation length & 8 \\
        Max generation length & 256 \\
        Temperature & 1.0 \\
        Top-p & 1.0 \\
        \bottomrule
    \end{tabularx}
    \captionsetup{skip=6pt} 
    \caption{Hyperparameters used in the instruction tuning task.}
    \label{tab:hyp_instruction_tuning} 
\end{table}

\newpage
\section{Evaluation Prompt for Text Summarization}
This section presents the prompt used to query the LLM for summarization quality evaluation. To mitigate potential position bias, the baseline summary and the summary generated by our method were randomly assigned to the \texttt{summary\_A} and \texttt{summary\_B} placeholders within the prompt.

\label{appendix:llm_eval_prompt}
\begin{tcolorbox}[
    breakable, 
    title=Human Evaluation Prompt for Text Summarization,
    colback=white,              
    colframe=black!75!white,    
    fonttitle=\bfseries,        
    coltitle=black,             
    colbacktitle=gray!15!white, 
    arc=1mm,                    
    outer arc=1mm,              
    boxrule=0.8pt,              
]
You are an expert human evaluator specializing in text summarization.
Your task is to meticulously compare two summaries, ``Summary A'' and ``Summary B,'' generated from the same ``Original Document.''
Your goal is to determine which summary is of higher quality overall.

Please consider the following criteria in your evaluation:

\begin{enumerate}[itemsep=0.5ex, topsep=0.5ex] 
    \item \textbf{Accuracy \& Faithfulness:}
    \begin{itemize}[itemsep=0.2ex, topsep=0.2ex, leftmargin=*]
        \item Does the summary accurately represent the main points of the original document?
        \item Does it avoid introducing new information or misinterpreting facts from the document (hallucinations)?
    \end{itemize}

    \item \textbf{Coverage \& Comprehensiveness:}
    \begin{itemize}[itemsep=0.2ex, topsep=0.2ex, leftmargin=*]
        \item Does the summary cover the most important information and key takeaways from the original document?
        \item Are there any critical omissions of essential information?
    \end{itemize}

    \item \textbf{Conciseness \& Succinctness:}
    \begin{itemize}[itemsep=0.2ex, topsep=0.2ex, leftmargin=*]
        \item Is the summary brief and to the point, avoiding unnecessary jargon, redundancy, or overly verbose phrasing, while still capturing essential information?
        \item Is it significantly shorter than the original document, as a good summary should be?
    \end{itemize}

    \item \textbf{Clarity \& Readability:}
    \begin{itemize}[itemsep=0.2ex, topsep=0.2ex, leftmargin=*]
        \item Is the summary well-written, grammatically correct, easy to understand, and fluent?
        \item Is the language clear and precise?
    \end{itemize}

    \item \textbf{Coherence:}
    \begin{itemize}[itemsep=0.2ex, topsep=0.2ex, leftmargin=*]
        \item Do the sentences in the summary flow logically? Does it make sense as a standalone piece of text?
        \item Is there a logical structure to the summary?
    \end{itemize}
\end{enumerate}

\textbf{Input:}

\textbf{Original Document:}
\begin{lstlisting}[style=mypromptstyle]
{original_document}
\end{lstlisting}

\textbf{Summary A:}
\begin{lstlisting}[style=mypromptstyle]
{summary_A}
\end{lstlisting}

\textbf{Summary B:}
\begin{lstlisting}[style=mypromptstyle]
{summary_B}
\end{lstlisting}

\textbf{Instructions for your response:}

\begin{enumerate}[itemsep=0.5ex, topsep=0.5ex]
    \item \textbf{Reasoning:}
    \begin{itemize}[itemsep=0.2ex, topsep=0.2ex, leftmargin=*]
        \item First, briefly state your understanding of the main purpose or key points of the \textbf{Original Document}.
        \item Then, provide a step-by-step comparative analysis of Summary A and Summary B based on the criteria listed above (Accuracy, Coverage, Conciseness, Clarity, Coherence).
        \item For each criterion, explicitly compare A and B. For instance, ``Regarding Accuracy, Summary A does X well, while Summary B struggles with Y...''
        \item Point out specific strengths and weaknesses of each summary. You can reference parts of the summaries or the original document if helpful (e.g., ``Summary A correctly captures the detail about X from paragraph 2 of the document, whereas Summary B omits this.'').
    \end{itemize}

    \item \textbf{Overall Decision:}
    \begin{itemize}[itemsep=0.2ex, topsep=0.2ex, leftmargin=*]
        \item After your detailed reasoning, clearly state which summary you believe is better overall and why, making a holistic judgment. If they are of very comparable quality, or if one excels in some areas while the other excels in others making a clear choice difficult, you can indicate that.
    \end{itemize}
\end{enumerate}

\textbf{Output Format:}

First, provide your detailed \textbf{Reasoning} as described above.
Then, on a new line, write ``\textbf{Overall Decision:}'' followed by your overall assessment.
Finally, on a separate, new line, output \textit{only} one letter:
\begin{itemize}[itemsep=0.2ex, topsep=0.2ex, leftmargin=*]
    \item `A' if Summary A is better.
    \item `B' if Summary B is better.
    \item `C' if both summaries are of very similar quality (a tie), or if one is not definitively superior to the other across the most important criteria.
\end{itemize}

Begin your evaluation now.

\end{tcolorbox}

\end{document}